\pgfplotsset{compat=1.7}
\title{On Parsing as Tagging}
\author{Afra Amini~\qquad~Ryan Cotterell \\
\fcolorbox{white}{white}{
 $\{$\texttt{\href{mailto:afra.amini.ethz.ch}{afra.amini}, }\texttt{\href{mailto:ryan.cotterell@inf.ethz.ch}{ ryan.cotterell}}$\}$\texttt{@inf.ethz.ch}
} \\ 
\setlength{\fboxsep}{2.5pt}
\setlength{\fboxrule}{2.5pt}
\fcolorbox{white}{white}{
    \includegraphics[width=.15\linewidth]{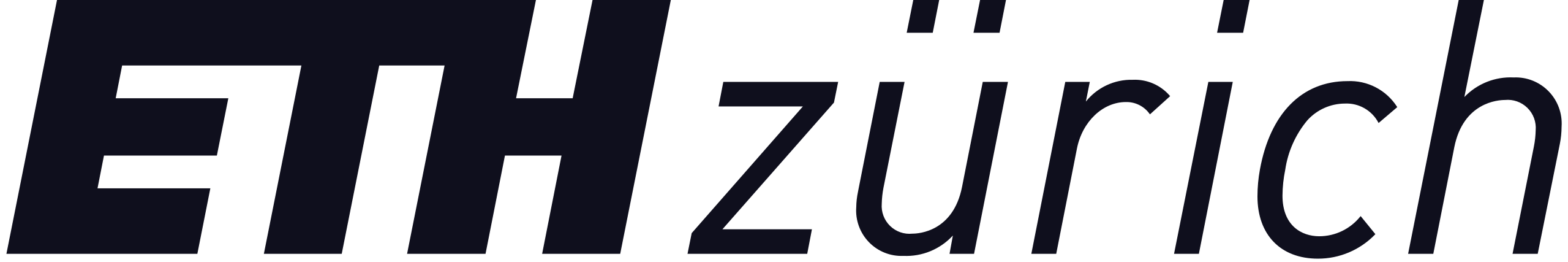}
}
}
\begin{document}
\maketitle
\begin{abstract}
There have been many proposals to reduce constituency parsing to tagging in the literature. 
To better understand what these approaches have in common, we cast several existing proposals into a unifying pipeline consisting of three steps: linearization, learning, and decoding. In particular, we show how to reduce tetratagging, a state-of-the-art constituency tagger, to shift--reduce parsing by performing a right-corner transformation on the grammar and making a specific independence assumption.
Furthermore, we empirically evaluate our taxonomy of tagging pipelines with different choices of linearizers, learners, and decoders. 
Based on the results in English and a set of 8 typologically diverse languages, we conclude that the linearization of the derivation tree and its alignment with the input sequence is the most critical factor in achieving accurate taggers.\looseness=-1

\vspace{1.5em}
\hspace{.5em}\includegraphics[width=1.25em,height=1.25em]{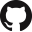}\hspace{.75em}\parbox{\dimexpr\linewidth-2\fboxsep-2\fboxrule}{\url{https://github.com/rycolab/parsing-as-tagging}}
\vspace{-1.0em}
\end{abstract}
\begin{figure}[t]
    \centering
     \setlength{\belowcaptionskip}{-10pt}
\includegraphics[width=\columnwidth]{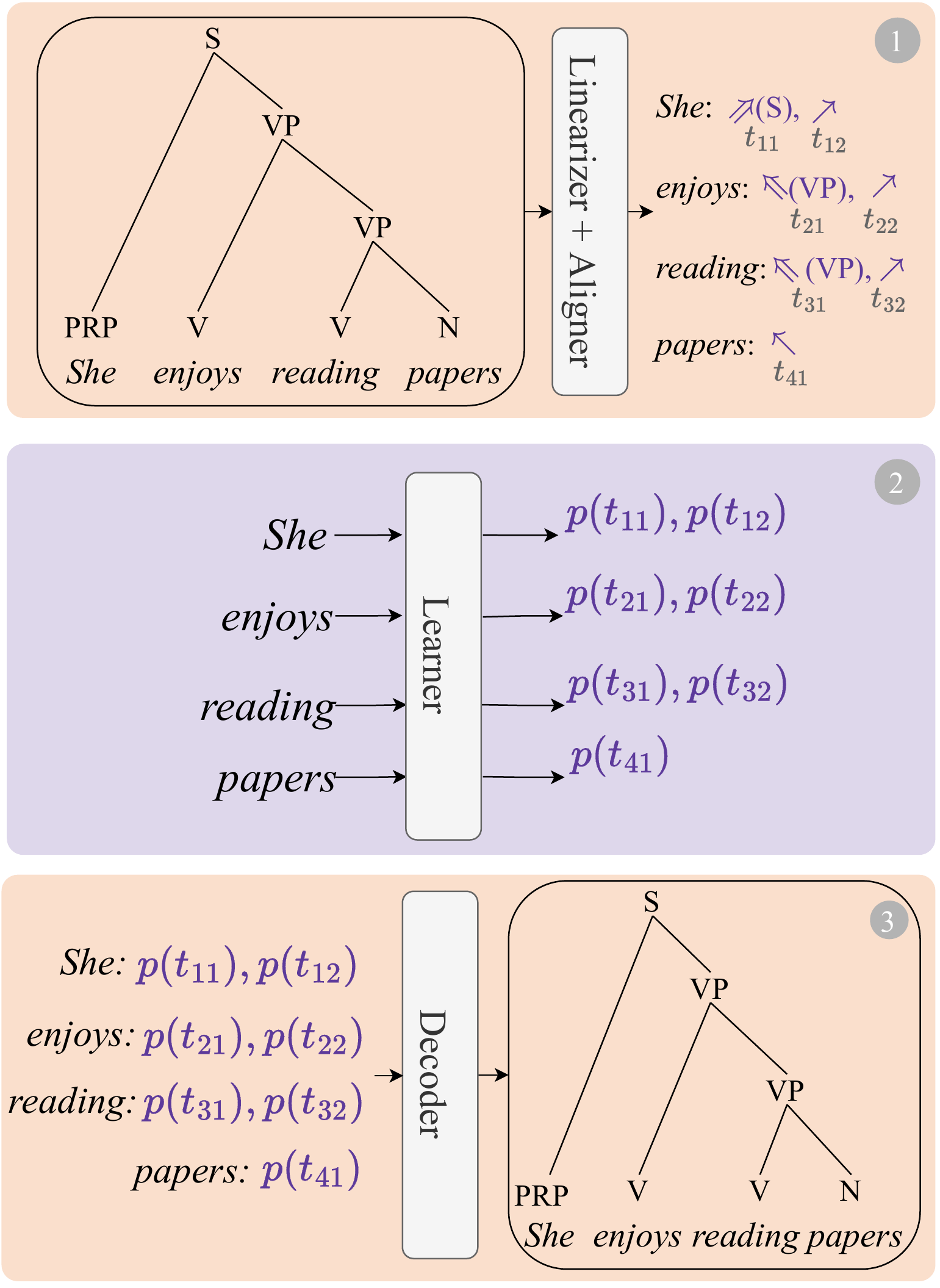}
    \caption{We subdivide parsing-as-tagging schemes into three steps: 1) linearization, 2) learning, and 3) decoding.
    The figure depicts how subdivision affects the parsing of the sentence \textit{She enjoys reading papers} with pre-order linearization.}
    \label{fig:pat-pipeline}
\end{figure}
\section{Introduction}
The automatic syntactic analysis of natural language text is an important problem in natural language processing (NLP).
Due to the ambiguity present in natural language,
many parsers for natural language are statistical in nature, i.e., they provide a probability distribution over syntactic analyses for a given input sequence.
A common design pattern for natural language parsers is to re-purpose tools from formal language theory that were often designed for compiler construction \citep{Aho:72}, to construct probability distributions over derivation trees (syntactic analyses).
Indeed, one of the most commonly deployed parsers in NLP is a statistical version of the classic shift--reduce parser, which has been widely applied in constituency parsing \citep{sagae-lavie-2005-classifier, zhang-clark-2009-transition, zhu-etal-2013-fast}, dependency parsing \citep{fraser1989parsing, yamada-matsumoto-2003-statistical, nivre-2004-incrementality}, and even for the parsing of more exotic formalisms, e.g., CCG parsing \citep{zhang-clark-2011-shift, xu-etal-2014-shift}.\looseness=-1

Another relatively recent trend in statistical parsing is the idea of reducing parsing to tagging \citep{gomez-rodriguez-vilares-2018-constituent, strzyz-etal-2019-viable, vilares-etal-2019-better, vacareanu-etal-2020-parsing, gomez-rodriguez-etal-2020-unifying, kitaev-klein-2020-tetra}. 
There are a number of motivations behind this reduction. 
For instance, \citet{kitaev-klein-2020-tetra} motivate tetratagging, their novel constituency parsing-as-tagging scheme, by its ability to parallelize the computation during training and to minimize task-specific modeling. 
The proposed taggers in the literature are argued to exhibit the right balance in the trade-off between accuracy and efficiency.
Nevertheless, there are several ways to cast parsing as a tagging problem, and the relationship of each of these ways to transition-based parsing remains underdeveloped.\looseness=-1

To better understand the relationship between parsing as tagging and transition-based parsing, we identify a common pipeline that many parsing-as-tagging approaches seem to follow. 
This pipeline, as shown in \cref{fig:pat-pipeline}, consists of three steps: linearization, learning, and decoding. 
Furthermore, we demonstrate that \citeposs{kitaev-klein-2020-tetra} tetratagging may be derived under two \emph{additional} assumptions on a classic shift--reduce parser: (i) a right-corner transform in the linearization step and (ii) factored scoring of the rules in the learning step.
We find that (i) leads to better alignment with the input sequence, while (ii) leads to parallel execution.\looseness=-1

In light of these findings, we propose a taxonomy of parsing-as-tagging schemes based on different choices of linearizations, learners, and decoders.
Furthermore, we empirically evaluate the effect that the different choice points have on the performance of the resulting model.
Based on experimental results, one of the most important factors in achieving the best-performing tagger is the order in which the linearizer enumerates the tree and the alignment between the tags and the input sequence. 
We show that taggers with in-order linearization are the most accurate parsers, followed by their pre-order and post-order counterparts, respectively.\looseness=-1

Finally, we argue that the effect of the linearization step on parsing performance can be explained by the deviation between the resulting tag sequence and the input sequence, which should be minimized. We theoretically show that in-order linearizers always generate tag sequences with zero deviation. 
On the other hand, the deviation of pre- and post-order linearizations in the worst case grows in order of sentence length.
Empirically, experiments on a set of 9 typologically diverse languages show that the deviation varies across languages and negatively correlates with parsing performance.
Indeed, we find that the deviation appears to be the most important factor in predicting parser performance.\looseness=-1

\begin{table*}[t]
\centering
\begin{tabular}{@{}rlll@{}}\toprule
& \multicolumn{3}{c}{Linearizations} \\
\cmidrule(lr){2-4}
$\awtd$ & $\tdsrTagger(\dww)$ & $\busrTagger(\dww)$ & $\tetraTagger(\dww)$ \\ \midrule
${\scriptstyle \reduce(\SSS \rightarrow \PRP\, \VP)}$ & $\Circled{1}: \Leftchild(\SSS)$ & $\Circled{2}: \leftchild$ & $\Circled{2}: \leftchild$ \\ 
${\scriptstyle \shift(\PRP \rightarrow \textit{She})}$ & $\Circled{2}: \leftchild$ & $\Circled{4}: \leftchild$ & $\Circled{1}: \Leftchild(\SSS)$ \\
${\scriptstyle \reduce(\VP \rightarrow \V\, \VP)}$ & $\Circled{3}: \Rightchild(\VP)$ & $\Circled{6}: \leftchild$ & $\Circled{4}: \leftchild$ \\
${\scriptstyle \shift(\V \rightarrow \textit{enjoys})}$ & $\Circled{4}: \leftchild$ & $\Circled{7}: \rightchild$ & $\Circled{3}: \Rightchild(\VP)$ \\
${\scriptstyle \reduce(\VP \rightarrow \V\, \N)}$ & $\Circled{5}: \Rightchild(\VP)$ & $\Circled{5}: \Rightchild(\VP)$ & $\Circled{6}: \leftchild$ \\
${\scriptstyle \shift(\V \rightarrow \textit{reading})}$ & $\Circled{6}: \leftchild$ & $\Circled{3}: \Rightchild(\VP)$ & $\Circled{5}: \Rightchild(\VP)$ \\
${\scriptstyle \shift(\N \rightarrow \textit{papers})}$ & $\Circled{7}: \rightchild$ & $\Circled{1}: \Leftchild(\SSS)$ & $\Circled{7}: \rightchild$ \\
\bottomrule
\end{tabular}
\begin{tabular}{c}
Derivation Tree: $\dww$ \\ 
\begin{forest} 
 for tree={myleaf/.style={label={[align=center]below:{\strut#1}}}}
 [$\SSS^{\Circled{1}}$  [$\PRP^{\Circled{2}}$[\textit{She}] ] [$\VP^{\Circled{3}}$  [$\V^{\Circled{4}}$[\textit{enjoys}] ] [$\VP^{\Circled{5}}$ [$\V^{\Circled{6}}$[\textit{reading}] ] [$\N^{\Circled{7}}$[\textit{papers}] ] ]  ] ] 
\end{forest}
\end{tabular}
\caption{Examples of different linearizations for the derivation tree of the sentence \textit{She enjoys reading papers}.
Note that $\awtd$ is the action sequence of a shift--reduce parser, $\tdsrTagger(\dww)$ is the pre-order linearization, $\busrTagger(\dww)$ is the post-order linearization, and $\tetraTagger(\dww)$ is the in-order linearization (tetratagger).}
\label{tab:linearizer-examples}
\end{table*}

\section{Preliminaries}
Before diving into the details of parsing as tagging, we introduce the notation that is used throughout the paper. 
We assume that we have a weighted grammar in Chomsky normal form $\grammar = \langle \nonterm, \SSS, \alphabet, \rules \rangle$ where $\nonterm$ is a finite set of nonterminal symbols, $\SSS \in \nonterm$ is a distinguished start symbol, $\alphabet$ is an alphabet of terminal symbols, and $\rules$ is a set of productions. Each production can take on two forms, either $\X \rightarrow \Y\, \Z$, where $\X, \Y, \Z \in \nonterm$, or $\X \rightarrow x$, where $\X \in \nonterm$ and $x \in \Sigma$.\footnote{We assume that the empty string is not in the yield of $\grammar$---otherwise, we would require a rule of the form $\SSS \rightarrow \varepsilon$.}
Let $\ww = w_1 w_2 \cdots w_N$ be the input sequence of $N$ words, where $w_n \in \alphabet$. 
Next, we give a definition of a derivation tree $\dww$ that yields $\ww$.\looseness=-1 

\begin{defin}[Derivation Tree]\label{def:derivation}
A \defn{derivation tree} $\dww$ is a labeled and ordered tree. 
We denote the nodes in $\dww$ as $\dnode$.
The helper function $\rho(\cdot)$ returns the label of a node.
If the node is an interior node, then $\dlabel(\dnode) = \X$ where $\X \in \nonterm$.
If the node is a leaf, then  $\dlabel(\dnode) = w$ where $w \in \alphabet$.\looseness=-1
\end{defin}
\noindent 
Note that under \Cref{def:derivation} a derivation tree $\dww$ must have $|\ww| = N$ leaves, which, in left-to-right order, are labeled as $w_1, w_2, \dots, w_N$.
We additionally define the helper function $\pi$ that returns
the production associated with an interior node.
Specifically, if $\dnode$ is an interior node of $\dww$ in a CNF grammar, then $\production(\dnode)$ returns either $\rho(\dnode) \rightarrow \rho(\dnode')\, \rho(\dnode'')$ if $\dnode$ has two children $\dnode'$ and $\dnode''$ or $\rho(\dnode) \rightarrow \rho(\dnode')$ if $\dnode$ has one child $\dnode'$.
We further denote the non-negative weight of each production in the grammar with $\score \geq 0$.\footnote{Note that, for every $\ww$, we must have at least one derivation tree with a non-zero score in order to be able to normalize the distribution.
} 
We now define the following distribution over derivation trees:
\begin{equation} \label{eq:parse_prob}
    p(\dww \mid \ww) 
    \propto 
    \prod_{\dnode \in \dww} \score(\production(\dnode))
\end{equation}
where we abuse notation to use set inclusion $\dnode \in \dww$ to denote an interior node in the derivation tree $\dww$.
In words, \Cref{eq:parse_prob} says that the probability of a derivation tree is proportional to the product of the scores of its productions.

\section{Linearization} \label{sec:linearizalign}
In this and the following two sections, we go through the parsing-as-tagging pipeline and discuss the choice points at each step. 
We start with the first step, which is to design a linearizer.
The goal of this step is to efficiently encode all we need to know about the parse tree in a sequence of tags.

\begin{defin}[\citet{gomez-rodriguez-vilares-2018-constituent}]\label{defin:linearizer}
A \defn{linearizer} is a function $\tagger : \Dww \rightarrow \calT^M$ that maps a derivation tree $\dww$ to a sequence of tags of length $M$, where $\Dww$ is the set of derivation trees with yield $\ww$, $\calT$ is the set of tags, $M$ is $\bigo{N}$, and $N$ is the length of $\ww$.
We further require that $\tagger$ is a total and injective function.
This means that each derivation tree $\dww$ in $\Dww$ is mapped to a unique tagging sequence $\mathbf{t}$, but that some tagging sequences do not map back to derivation trees.\looseness=-1
\end{defin}

We wish to contextualize \Cref{defin:linearizer} 
by contrasting parsing-as-tagging schemes, as formalized in \Cref{defin:linearizer}, with classical transition-based parsers.
As written, \Cref{defin:linearizer} subsumes many transition-based parsers, e.g., \citeposs{nivre-2008-algorithms} arc-standard and arc-eager parsers.\footnote{\citet{nivre-2008-algorithms} discusses dependency parsers, but both arc-standard and arc-eager can be easily modified to work for constituency parsing, the subject of this work.}
However, in the case of most transition-based parsers, the linearizer $\tagger$ is a \emph{bijection} between the set of derivation trees $\Dww$ and the set of valid tagging sequences, which we denote as $\calT_{\ww} \subset \calT^M$.
The reason for this is that most transition-based parsers require a global constraint to ensure that the resulting tag sequence corresponds to a valid parse.\looseness=-1

In contrast, in \Cref{defin:linearizer}, we require a looser condition---namely, that $\tagger: \mathcal{D}_\ww \rightarrow \calT^M$ is an \emph{injection}.
Therefore, tagging-based parsers allow the prediction of invalid linearizations, i.e., sequences of tags that \emph{cannot} be mapped back to a valid derivation tree. 
More formally, 
the model can place non-zero score on elements in $\calT^M \setminus \calT_\ww$.
However, the hope is that the model \emph{learns} to place zero score on elements in $\calT^M \setminus \calT_\ww$, and, thus, enforces a hard constraint.
Therefore, parsing-as-tagging schemes come with the advantage that they make use of a simpler structure prediction framework, i.e., tagging.
Nevertheless, in practice, they still learn a useful parser that only predicts valid sequences of tags due to the expressive power of the neural network backbone.\looseness=-1

\subsection{Pre-Order and Post-Order Linearization}
In this section, we discuss how to define linearizers based on the actions that \defn{shift--reduce parsers} take.
A shift--reduce parser gives a particular linearization of a derivation tree via a sequence of \textsc{shift} and \textsc{reduce} actions.
A \emph{top-down} shift--reduce parser enumerates a derivation tree $\dww$ in a pre-order fashion, taking a $\shift(\X \rightarrow x)$ action when visiting a node $\dnode$ with $\production(\dnode) = \X \rightarrow x$, and a $\reduce(\X \rightarrow \Y\, \Z)$ action when $\production(\dnode) = \X \rightarrow \Y\, \Z$.
The parser halts when every node is visited.
As an example, consider the sentence: \textit{She enjoys reading papers}, with the parse tree depicted in \Cref{tab:linearizer-examples}.
The sequence of actions used by a top-down shift--reduce parser is shown under the column $\awtd$.
Given the assumption that the grammar is in Chomsky normal form, the parser takes $N$ \shift actions and $N-1$ \reduce actions, resulting in a sequence of length $M = 2N-1$. From this sequence of actions, we construct the tags output by the pre-order linearization $\tdsrTagger$ as follows: 
\begin{enumerate}
    \item We show reduce actions with double arrows $\Leftchild, \Rightchild$, and shift actions with $\leftchild, \rightchild$.
    \item For \shift actions, we encode whether the node being shifted is a left or a right child of its parent.\footnote{We assume that part-of-speech tags are given. 
    Otherwise, the identity of the non-terminal being shifted should be encoded in $\rightchild$ and $\leftchild$ tags as well.} We show left terminals with $\leftchild$ and right terminals with $\rightchild$.
    \item For \reduce actions, we encode the identity of the non-terminal being reduced as well as whether it is a left or a right child. 
    We denote reduce actions that create non-terminals that are left (right) children as $\Leftchild$ ($\Rightchild$).
\end{enumerate}
The output of such a linearization $\tdsrTagger$ is shown in \Cref{tab:linearizer-examples}. 
Similarly, a \emph{bottom-up} shift--reduce parser can be viewed as post-order traversal of the derivation tree; see $\busrTagger$ in \Cref{tab:linearizer-examples} for an example.

\subsection{In-Order Linearization}
On the other hand, the linearization function used in tetratagger does an in-order traversal \citet{liu-zhang-2017-order} of the derivation tree. 
Similarly to the pre- and post-order linearization, when visiting each node, it encodes the direction of the node relative to its parent in the tag sequence, i.e., whether this node is a left child or a right child.
For a given sentence $\ww$, the tetratagging linearization $\bt$ also has a length of $M=2N-1$ actions. 
An example can be found under the column $\tetraTagger$ in \Cref{tab:linearizer-examples}.
The novelty of tetratagging lies in how it builds a fixed-length tagging schema using an in-order traversal of the derivation tree. 
While this approach might seem intuitive, the connection between this particular linearization and shift--reduce actions is not straightforward. 
The next derivation states that the tetratag sequence can be derived by shift--reduce parsing on a \emph{transformed} derivation tree. \looseness=-1

\begin{der}[Informal] \label{thm:tetra-equivalence}
There exists a sequence transformation function that transforms bottom-up shift--reduce actions $\awbu$ on the \defn{right-corner transformed} derivation tree to tetratags. 
\end{der}

A similar statement holds for transforming top-down shift--reduce actions on the \defn{left-corner}\footnote{We refer the reader to \citet{johnson-1998-finite-state} for a close read on grammar transformations.} derivation trees to tetratags; a more formal derivation can be found in \cref{app:theorems}.\looseness=-1
\subsection{Deviation}
The output of the linearization function is a sequence of tags $\bt$. Next, we need to align this sequence with the input sequence $\ww$. 
Inspired by \citet{gomez-rodriguez-etal-2020-unifying}, we define an alignment function as follows.
\begin{defin}\label{defin:aligner}
The \defn{alignment function} $\aligner$ maps each word in an input sequence of length $N$ to at most $K$ tags in a tag sequence of length $M$: $\aligner(\ww, \bt) = t_{11}, t_{12},\dots, t_{1K},$ $\dots, t_{N1}, t_{N2}, \dots, t_{NK}$. We say $w_n$ is mapped to $t_{n1}, \dots, t_{nK}$.\looseness=-1
\end{defin}
Because all of the aforementioned linearizers generate tag sequences of length $2N-1$, a natural alignment schema, which we call \defn{paired alignment} $\paligner$, is to assign two tags to each word, except for the last word, which is only assigned to one tag. \cref{fig:pat-pipeline} depicts how such an alignment is applied to a sequence of pre-order linearizations with an example input sequence.
\begin{defin}\label{defin:deviation}
We define \defn{deviation} of $\aligner(\ww, \bt)$ to be the distance between a word and the tag corresponding to shifting this word into the stack. Formally, for the $n^{\text{th}}$ word in the sequence, if $t_{lk} = \shift(w_{n})$, then the deviation would be $|n - l|$. We further call a sequence of shift--reduce tags \defn{shift-aligned} with the input sequence if and only if it has zero deviation for any given word and sentence.\looseness=-1
\end{defin}
Note that if the paired alignment is used with an in-order shift--reduce linearizer, as done in tetratagger, it will be shift-aligned. However, such schema will not necessarily be shift aligned when used with pre- or post-order linearizations and leads to non-zero deviations. More formally, the following three propositions hold.
 
\begin{prop}
For any input sequence $\ww$, the paired alignment of this sequence with the in-order linearization of its derivation tree, denoted $\paligner(\ww, \tetraTagger)$, has a maximum deviation of zero, i.e., it is shift-aligned.
\end{prop}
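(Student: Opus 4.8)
The plan is to locate, within the tetratag sequence $\tetraTagger(\dww)$, exactly which tag is the shift action of the $n$-th word, and then to verify that the paired alignment $\paligner$ assigns that very tag to $w_n$. The structural fact I would isolate first is: \emph{an in-order traversal of a subtree of $\dww$ with $k$ leaves emits a block of exactly $2k-1$ tags in which the shift action of the $j$-th leaf of that subtree (in left-to-right order) is the $(2j-1)$-th tag of the block.} I would prove this by induction on $k$. When $k = 1$ the subtree is a preterminal node with production $\X \rightarrow x$; its in-order traversal emits the single tag $\shift(\X \rightarrow x)$, which is tag number $2 \cdot 1 - 1 = 1$, as claimed. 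For the inductive step, a binary node with production $\X \rightarrow \Y\, \Z$ whose left subtree has $k_1$ leaves and whose right subtree has $k_2 = k - k_1$ leaves is traversed in-order as: (i) the left subtree, contributing $2k_1 - 1$ tags with the $j$-th left leaf at position $2j-1$; (ii) the node $\X$ itself, contributing its reduce tag at position $2k_1$; and (iii) the right subtree, contributing $2k_2 - 1$ tags, so that the $i$-th right leaf lands at position $2k_1 + (2i-1)$. Since the $i$-th leaf of the right subtree is the $(k_1 + i)$-th leaf of the whole subtree and $2k_1 + (2i - 1) = 2(k_1+i) - 1$, the claim holds, and the block length is $(2k_1 - 1) + 1 + (2k_2 - 1) = 2k - 1$.

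Applying this with $k = N$ to the entire derivation tree $\dww$, whose leaves in order are $w_1, \dots, w_N$, gives that $\shift(w_n)$ is the $(2n-1)$-th tag of $\tetraTagger(\dww)$ and that the sequence has the expected length $M = 2N - 1$. It then remains to unwind the definition of the paired alignment: $\paligner$ assigns to $w_n$ the tags $t_{n1} = t_{2n-1}$ and $t_{n2} = t_{2n}$ for $n < N$, and assigns the last word $w_N$ the single tag $t_{N1} = t_{2N-1}$. In every case the $(2n-1)$-th tag of the sequence is $t_{n1}$, i.e., it is mapped to word $w_n$. Combining this with the previous paragraph, the tag $\shift(w_n)$ equals $t_{n1}$, so in the notation of \Cref{defin:deviation} the index $l$ at which $w_n$ is shifted is $l = n$, and hence the deviation $|n - l|$ of $w_n$ is zero. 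Since this holds for every $n$, the maximum deviation of $\paligner(\ww, \tetraTagger)$ is zero, i.e., the alignment is shift-aligned.

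I expect the only real difficulty to be careful bookkeeping rather than anything conceptual. The two points that need attention are (a) the two-tier structure of a CNF derivation tree---binary interior productions together with the preterminal unary productions $\X \rightarrow x$---which is why the induction's base case must be the preterminal node and the ``visit'' of a unary node must be charged exactly one tag, and (b) the off-by-one arising because the last word receives only one tag under the paired alignment. Once the subtree-level invariant ``the $j$-th leaf's shift is tag $2j-1$'' is in place, both points are routine, and no genuinely hard step remains.
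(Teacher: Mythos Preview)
Your proof is correct. The paper's own proof is a two-sentence sketch: it simply observes that in-order traversal of a binary tree produces a sequence in which every shift is immediately followed by a reduce, and since words are shifted left to right this forces the $n$-th shift to land at position $2n-1$, exactly where $\paligner$ places $w_n$. Your argument reaches the same position formula but by a different route: rather than appealing to the global alternation pattern, you run a structural induction on subtrees, carrying the invariant that the $j$-th leaf of a $k$-leaf subtree has its shift at local position $2j-1$ inside a block of length $2k-1$. Your version is more explicit and handles the CNF two-tier structure (preterminal base case, binary inductive step) and the off-by-one for the last word carefully; the paper's alternation observation is shorter but leaves those bookkeeping points implicit. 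Both establish the same fact, and neither requires any idea the other lacks.
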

\begin{proof}
In a binary derivation tree, in-order traversal results in a sequence of tags where each shift is followed by a reduce. Since words are shifted from left to right, each word will be mapped to its corresponding shift tag by $\paligner$, thus creating a shift-aligned sequence of tags.
\end{proof}
\begin{prop}
For any input sequence $\ww$ of length $N$, the maximum deviation of the paired alignment applied to a pre-order linearization $\paligner(\ww, \tdsrTagger)$ in the worst case is $\bigo{N}$.
\end{prop}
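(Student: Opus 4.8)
The plan is to prove the bound in two halves: a trivial upper bound, and a matching lower bound that exhibits, for each $N$, a single derivation tree whose paired alignment has a word with deviation on the order of $N/2$. Together these show that the worst-case maximum deviation of $\paligner(\ww, \tdsrTagger)$ is $\Theta(N)$, hence $\bigo{N}$; since the upper bound is immediate, the substance lies entirely in the lower-bound construction, which I describe first.

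Recall that the pre-order linearizer $\tdsrTagger$ emits the action of a node---a $\reduce$ tag for a node with a binary production, a $\shift$ tag for a preterminal node with production $\X \rightarrow x$---\emph{before} descending into its left subtree and then its right subtree. I would take $\dww$ to be the \emph{left-comb} tree over $w_1 \cdots w_N$: interior nodes $A_1, \dots, A_{N-1}$ with productions $A_1 \to A_2\, P_N$, $A_2 \to A_3\, P_{N-1}$, \dots, $A_{N-1} \to P_1\, P_2$, together with preterminals $P_n \to w_n$. Running $\tdsrTagger$ on $\dww$, the entire left spine $A_1, \dots, A_{N-1}$ is visited before any preterminal, so the first $N-1$ tags are all $\reduce$ tags; the traversal then bottoms out and emits $\shift(w_1)$, after which it backtracks up the spine emitting $\shift(w_2), \dots, \shift(w_N)$ in order. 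Hence $\shift(w_1)$ occupies position $N$ in the length-$(2N-1)$ tag sequence.

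The remaining step is alignment bookkeeping. Under $\paligner$, the tag at position $j$ is assigned to word $w_{\lceil j/2 \rceil}$ (with $w_N$ receiving the final tag, consistent with $\lceil (2N-1)/2 \rceil = N$). Thus $\shift(w_1)$, sitting at position $j = N$, is aligned to $w_{\lceil N/2 \rceil}$, so by \Cref{defin:deviation} the deviation of $w_1$ is $\lvert 1 - \lceil N/2 \rceil \rvert = \lceil N/2 \rceil - 1$, and the maximum deviation of $\paligner(\ww, \tdsrTagger)$ on this tree is at least $\lceil N/2 \rceil - 1 = \Omega(N)$. For the upper bound, every tag of $\paligner(\ww, \tdsrTagger)$ is aligned to some word index in $\{1, \dots, N\}$, so each deviation $\lvert n - l \rvert$ is at most $N-1$; combining the two gives the claimed $\bigo{N}$. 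The only delicate point is the indexing: verifying that the left spine really produces $N-1$ leading $\reduce$ tags with no earlier $\shift$, and pinning down the position-to-word map $\lceil j/2 \rceil$ (including the boundary tag $t_{2N-1}$)---both routine, but easy to get wrong by one.
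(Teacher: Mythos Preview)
Your proposal is correct and follows the same approach as the paper: an immediate $\bigo{N}$ upper bound together with a left-branching (left-comb) tree to witness the lower bound via the deviation of the first word. Your indexing is in fact a touch more careful than the paper's---you obtain $\lceil N/2\rceil-1$ where the paper states $\lfloor N/2\rfloor$---but the two agree for odd $N$, differ by at most $1$, and are both $\Theta(N)$, so the argument is essentially identical.
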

\begin{proof}
The deviation is bounded above by $\bigo{N}$.
Now consider a left-branching tree.
In this case, $\tdsrTagger$ starts with at most $N-1$ reduce tags before shifting the first word. Therefore, the maximum deviation, which happens for the first word, is $\lfloor \frac{N}{2} \rfloor$, which is $\bigo{N}$, achieving the upper bound.
\end{proof}
\begin{prop}
For any input sequence $\ww$ with length N, the maximum deviation of the paired alignment applied to a post-order linearization $\paligner(\ww, \busrTagger)$ in the worst case is $\bigo{N}$.
\end{prop}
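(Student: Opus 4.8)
The plan is to follow the proof of the pre-order case almost verbatim, but with ``first word'' and ``last word'' interchanged: the extremal example for a post-order linearizer is the maximally right-branching derivation tree (the mirror image of the left-branching tree used above), and under the paired alignment all of the deviation piles up on the \emph{last} word rather than the first. First I would record the trivial upper bound. Under $\paligner$ every tag position falls inside the two-tag block of some word, and both the word index $n$ and the block index $l$ range over $\{1,\dots,N\}$; hence the deviation $|n-l|$ of any word, for any derivation tree, is at most $N-1$, i.e.\ $\bigo{N}$. This already discharges the ``at most'' direction of the claim.

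Next I would exhibit a witness attaining $\Theta(N)$ deviation. Let $\dww$ be the derivation tree whose binary nodes form a right spine, so that the preterminal of $w_1$ is the left child of the root, the preterminal of $w_2$ is the left child of the root's right child, and so on, with the preterminals of $w_{N-1}$ and $w_N$ as siblings at the bottom. A post-order traversal visits the left subtree of every node before its right subtree and before the node itself; since every left subtree here is a single preterminal, the traversal emits $\shift(w_1),\shift(w_2),\dots,\shift(w_N)$ and only afterwards the $N-1$ reduce actions, bottom-up to the root. Hence in $\busrTagger(\dww)$ the tag $\shift(w_n)$ sits at position $n$ of the length-$(2N-1)$ tag sequence.

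Finally I would pass through $\paligner$: it places tag positions $2l-1$ and $2l$ in the block of $w_l$ (for $l<N$) and position $2N-1$ in the block of $w_N$, so the tag at position $n$ belongs to word $l=\lceil n/2\rceil$ and the deviation of $w_n$ is $n-\lceil n/2\rceil=\lfloor n/2\rfloor$. Setting $n=N$ yields a deviation of $\lfloor N/2\rfloor$, which is $\bigo{N}$ and matches the upper bound up to a constant factor, completing the argument. I do not anticipate a real obstacle: the only thing worth a second glance is that this right-branching tree is genuinely extremal, i.e.\ that no tree can push the $n^{\text{th}}$ shift earlier than position $n$ (there are $n-1$ preceding shifts) or much later than position $2n$ (at most $n-2$ binary nodes can span only the first $n-1$ words), so the per-word deviation never exceeds $\lfloor n/2\rfloor$; but even skipping this refinement, the crude bound from the first paragraph already gives the stated $\bigo{N}$.
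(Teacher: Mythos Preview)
Your proof is correct and follows exactly the approach the paper intends: the paper's own proof is the single sentence ``The proof is similar to the pre-order case above, but with a right-branching tree instead of a left-branching one,'' and you have carried out precisely that mirror-image argument in full detail, including the $\lfloor N/2\rfloor$ computation. The extra extremality remark in your final paragraph is not needed (as you note, the trivial $N-1$ upper bound already suffices), but it does no harm.
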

\begin{proof}
The proof is similar to the pre-order case above, but with a right-branching tree instead of a left-branching one.
\end{proof}

Intuitively, if the deviation is small, it means that the structural information about each word is encoded close to the position of that word in the tag sequence.
Thus, it should be easier for the learner to model this information due to the locality.
Therefore, we expect a better performance from taggers with shift-aligned or low deviation linearizations. 
Later, in our experiments, we empirically test this hypothesis.\looseness=-1


\begin{figure*}[t]
    \centering
    \includegraphics[width=\linewidth]{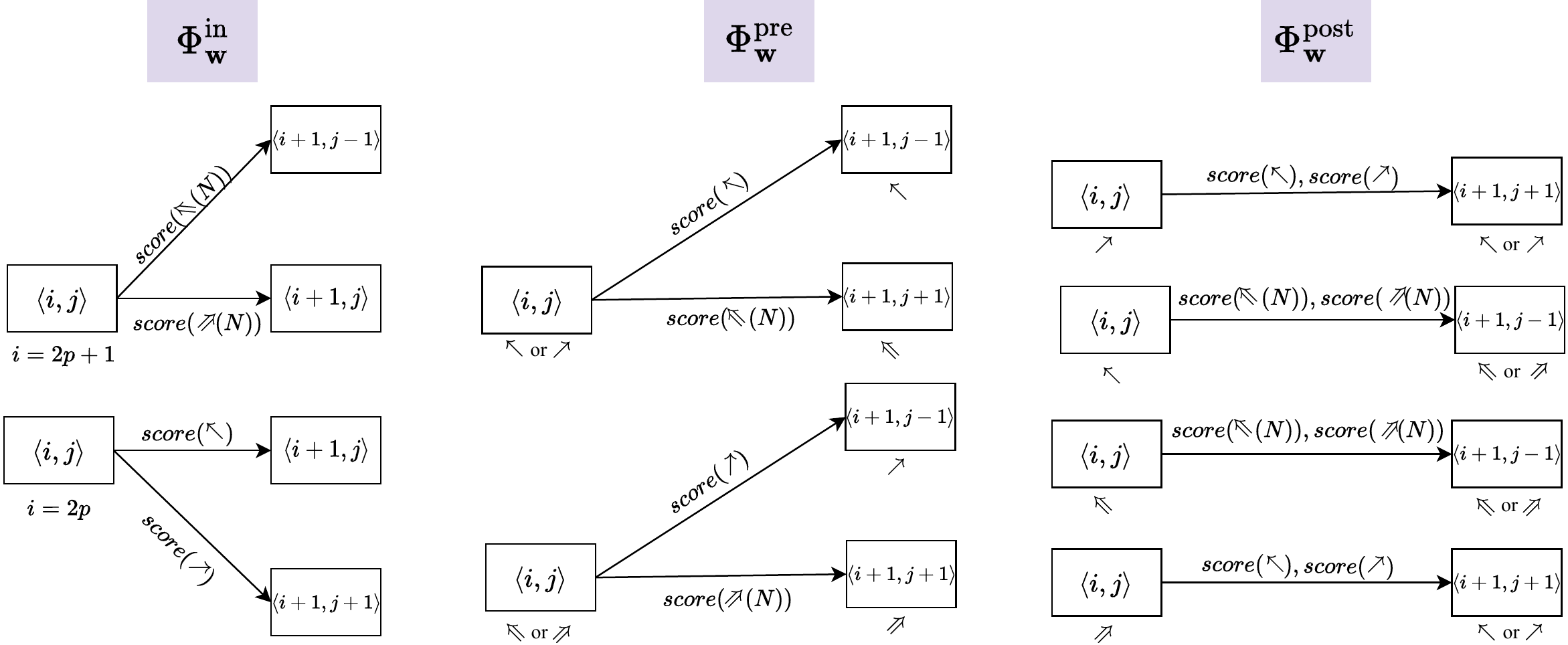}
    \caption{Decoding DAG templates for three linearization schemes.}
    \label{fig:decode}
\end{figure*}

\section{Learning} \label{sec:learn}
In this section, we focus on the second step in a parsing-as-tagging pipeline, which is to score a tag sequence. 
The parsers-as-taggers proposed in the literature often simplify the probability distribution over derivation trees, as defined in \cref{eq:parse_prob}.
We first discuss various approximations
we can make to the score function $\score(\X \rightarrow \Y \, \Z)$.
The three approximations we identify each make the model less expressive, but allow for more efficient decoding algorithms (see \cref{sec:decode}).
Then, we exhibit how the crudest approximation allows for a fully parallelized tagger.\looseness=-1 

\subsection{Approximating the Score Function}
We identify three natural levels of approximation for $\score(\X \rightarrow \Y \, \Z)$, which we explicate below.\looseness=-1

\paragraph{Independent Approximation.}
The first approximation is $\score(\X \rightarrow \Y \, \Z) \defeq \score(\bullet \rightarrow \Y\, \bullet) \times \score(\bullet \rightarrow \bullet \, \Z)$, which we call the \defn{independent} approximation.
We can enact the independent approximation under all three linearization schemes $\tdsrTagger$, $\busrTagger$, and $\tetraTagger$.
In each scheme, the tag encodes the nonterminal and its position with respect to its parent, i.e., whether it is a left child or a right child. The independent approximation entails that the score of each rule is the product of the scores of the left child and the right child. 

\paragraph{Left-Dependent Approximation.}
Second, we consider the approximation $\score(\X \rightarrow \Y \, \Z) \defeq \score(\X \rightarrow \Y \bullet)$, which we call the \defn{left-dependent} approximation.
The left-dependent approximation is only valid under the pre-order linearization scheme.
The idea is that, after generating a tag for a node $\dlabel(\dnode) = \X$ as a part of the production $\production(\dnode) = \X \rightarrow \Y \,\Z$, we immediately generate a tag for its left child $\Y$ \emph{independent} of $\Z$. 
Therefore, if we assume that the score of each tag only depends on the last generated tag, we simplify $\score(\X \rightarrow \Y \, \Z)$ as $\score(\X \rightarrow \Y \bullet)$.

\paragraph{Right-Dependent Approximation.}
The third approximation is the \defn{right-dependent} approximation, which takes the form $\score(\X \rightarrow \Y \, \Z) \defeq \score(\X \rightarrow \bullet \, \Z)$.
The right-dependent approximation is only valid under the post-order linearization scheme.
It works as follows: Immediately after generating the tag for the right child, the tag for its parent is generated. Again, if we assume that the score of each tag only depends on the last generated tag, we simplify $\score(\X \rightarrow \Y \, \Z)$ here as $\score(\X \rightarrow \bullet \, \Z)$.

\subsection{Increased Parallelism}
One of the main benefits of modeling parsing as tagging is that one is then able to fine-tune pre-trained models, such as BERT \citep{devlin-etal-2019-bert}, to predict tags efficiently \emph{in parallel} without the need to design a task-specific model architecture, e.g., as one does when building a standard transition-based parser.
In short, the learner tries to predict the correct tag sequence from pre-trained BERT word representations \emph{and} learns the constraints to enforce that the tagging encodes a derivation tree.

Given these representations, in the independent approximation, a common practice is to take $K$ \emph{independent} projection matrices and apply them to the last subword unit of each word, followed by a softmax to predict the distribution over the $k^{\text{th}}$ tag. 
Given a derivation tree $\dww$ for a sentence $\ww$, our goal
is to maximize the probability of the tag sequence that is the result of linearizing the derivation tree, i.e.,  $\tagger(\dww) = \tagging$.
The training objective is to maximize the probability
\begin{align}
    p(\tagging &\mid \ww)
    =\prod_{\substack{1 \leq n \leq N ,\\ 1 \leq k \leq K}} p(\rv{T}_{nk} = t_{nk} \mid \ww) \label{eq:bertprob} \\
    &= \prod_{\substack{1 \leq n \leq N, \\ 1 \leq k \leq K}} \mathrm{softmax}(W_k \cdot \mathrm{bert}(w_n \mid \ww))_{t_{nk}} \nonumber 
\end{align}
where $p(\rv{T}_{nk} = t_{nk} \mid \ww)$ is the probability of predicting the tag $t$ for the $k^{\text{th}}$ tag assigned to $w_n$, and $\mathrm{bert}(w_n \mid \ww)$ is the output of the last layer of $\mathrm{bert}$ processing $w_n$ in the context of $\ww$, which is a vector in $\mathbb{R}^d$. 
Note that \cref{eq:bertprob} follows from the independence approximation where the probability of each tag does \emph{not} depend on the other tags in the sequence. 
Due to this approximation, there exist distributions that can be expressed by \cref{eq:parse_prob}, but which  can \emph{not} be expressed by \cref{eq:bertprob}.

Making a connection between \cref{eq:parse_prob} and \cref{eq:bertprob} helps us understand the extent to which we have restricted the expressive power of the model through independent assumptions to speed up training. 
For instance, we can properly model tagging sequences under a left- and right-dependent approximation using a conditional random field \citep[CRF;][]{10.5555/645530.655813} with a first-order Markov assumption. 
Alternatively, we can create a dependency between the tags using an LSTM \citep{10.1162/neco.1997.9.8.1735} model. In our experiments, we compare these approximations by measuring the effect of the added expressive power on the overall parsing performance.\looseness=-1

\begin{figure*}[t]
    \centering
    \includegraphics[width=\linewidth]{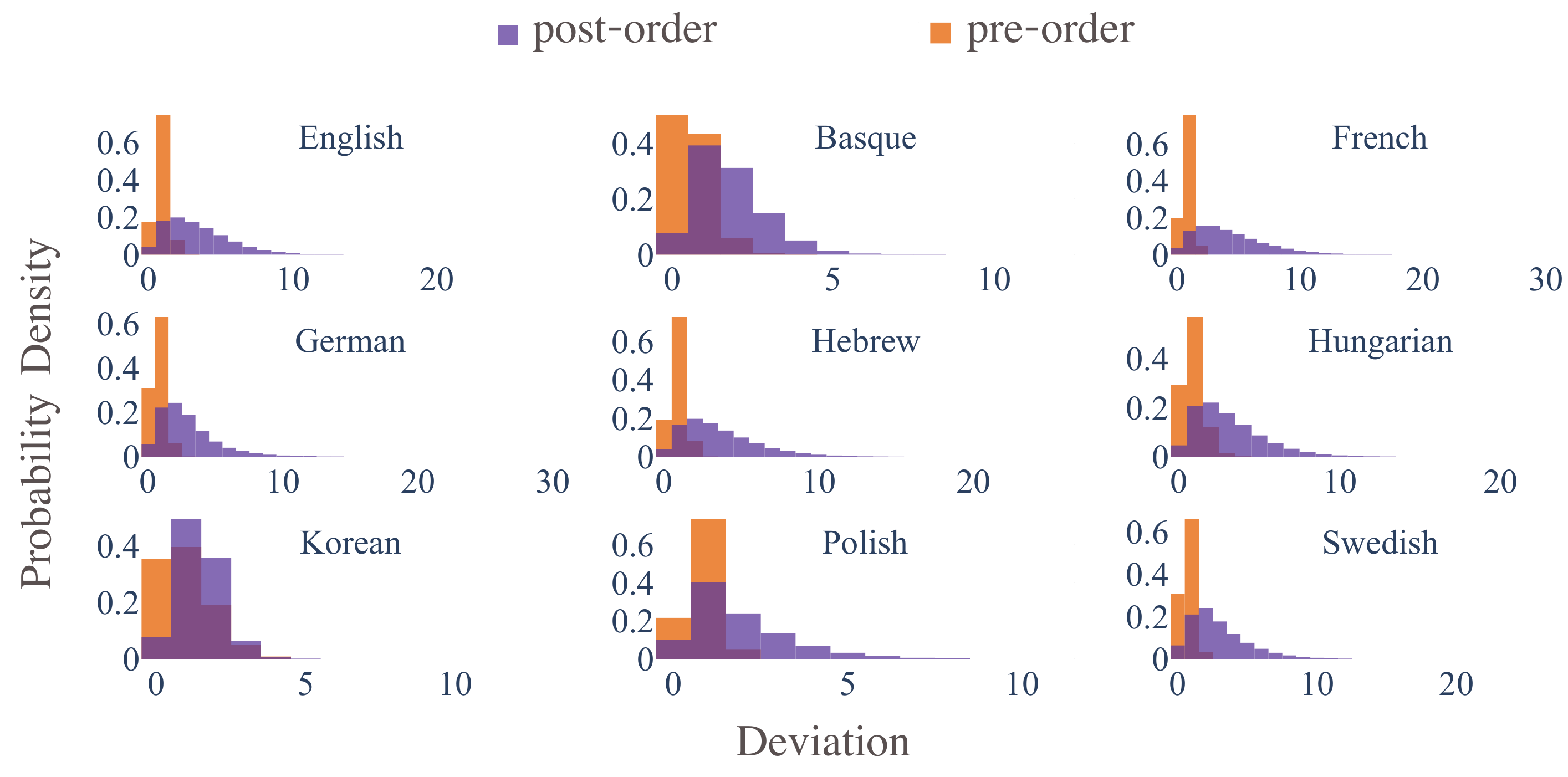}
    \caption{Distribution of word-level deviation in a random sample of 5000 sentences. In most languages, pre-order linearization gives a better alignment with the input sentence because the deviation peaks close to zero. However, the shape and the distance between the two distribution varies across languages.}
    \label{fig:deviation}
\end{figure*}

\section{Decoding} \label{sec:decode}
In this section, we focus on the last step in a parsing-as-tagging pipeline, which is to decode a tag sequence into a tree.
The goal of the decoding step is to find a \emph{valid} sequence of tags $\bt^*$ that are assigned the highest probability under the model for a given sentence, i.e.,
\begin{align}
    \bt^* \defeq \argmax_{\bt \in \mathcal{T}_{\ww}}\, p(\bt \mid \ww)
\end{align}
where $\mathcal{T}_{\ww}$ is the set of tag sequences with a yield of $\ww$. 
Again, we emphasize that not all sequences of tags are valid, i.e., not all tagging sequences can be mapped to a derivation tree, 
Therefore, in order to ensure we always return a valid tree, the invalid sequences must be detected and weeded out during the decoding process.
This will generally require a more complicated algorithm.\looseness=-1

\subsection{Dynamic Programming}
We extend the dynamic program (DP) suggested by \citet{kitaev-klein-2020-tetra} for decoding an in-order linearization to other variations of linearizers.
\citeposs{kitaev-klein-2020-tetra} dynamic program relies on the fact that the validity of a tag sequence does \emph{not} depend on individual elements in the stack.
Instead, it depends only on the \emph{size} of the stack at each derivation point.
We show that the same observation holds for both $\busrTagger$ and $\tdsrTagger$, which helps us to develop an efficient decoding algorithm for these linearizations.\looseness=-1
\paragraph{Decoding $\tetraTagger$.}
We start by introducing \citeposs{kitaev-klein-2020-tetra} dynamic program for their $\tetraTagger$ model.
Like many dynamic programs, theirs can be visualized as finding the highest-scoring path in a weighted directed acyclic graph (DAG). 
Each node in the DAG represents the number of generated tags $i$ and the current stack size $j$. 
Each edge represents a transition weighted by the score of the associated tag predicted by the learner. Only sequences of exactly $2N-1$ tags will be valid.
The odd tags must either be $\leftchild$ or $\rightchild$,\footnote{With the exception of the first tag that can only be $\leftchild$.} and the even tags must be either $\Leftchild$ or $\Rightchild$. The only accepted edges when generating the $i^{\text{th}}$ tag ($1 < i \leq 2N-1$) in the sequence are shown in \cref{fig:decode} left. The starting node is $\langle1,1\rangle$, with $\leftchild$ in the stack, and the goal is to reach the node: $\langle2N-1,1\rangle$. 

\paragraph{Decoding \normalfont{$\tdsrTagger$} \textbf{and} \normalfont{$\busrTagger$\textbf{.}}} 
To adapt the algorithm introduced above to work with pre-order linearization, we first investigate what makes a pre-order tagging sequence valid:
\begin{itemize}
    \item When a $\reduce$ tag ($\Leftchild$ or $\Rightchild$) is generated, there must be at least one node in the stack, and after performing the $\reduce$, the size of the stack is increased by one.
    \item Immediately after shifting a node, we either process (perform a $\shift$ or $\reduce$ on) that node's right sibling or shift the right sibling. Therefore, the only valid tags to generate subsequently are $\rightchild$ or $\Rightchild$.
    \item Immediately after reducing a node, we process (perform a $\shift$ or $\reduce$ on) its left child. Therefore, it is only valid to generate either $\leftchild$ or $\Leftchild$.
\end{itemize}

The above constraints suggest that if we know the previous tag type (\textsc{shift} or \textsc{reduce}) and the stack size at each step of generation, we can form the valid transitions from each node $\langle i, j\rangle$. Such transitions are shown in \cref{fig:decode} center. Similarly, the valid transitions for decoding post-order linearization are shown in the right part of \cref{fig:decode}.

\paragraph{Time Complexity.} The complexity of the dynamic program to find the highest-scoring path depends on the number of unique tags $\bigo{|\nonterm|}$, the length of the tag sequence $\bigo{N}$, and the stack size $d$.
Linearization schemes derived from the right-corner transform tend to have smaller stack sizes \citep{abney1991memory, schuler-etal-2010-broad}.\footnote{For purely left- or right-branching trees, the left-corner or right-corner transformations, respectively, provably reduce the stack size to one \citep{johnson-1998-finite-state}. 
These transformations could increase the stack size for centered-embedded trees. However, such trees are rare in natural language as they are difficult for humans to comprehend \citep{GIBSON19981}.}
As a result, one can often derive a faster, \emph{exact} decoding algorithm for such schemes. Please refer to \Cref{app:dps} for a more formal treatment of the dynamic program for finding the highest-scoring tag sequence using these transitions (\Cref{alg:dp-indep}).

\paragraph{An $\bigo{dN|\nonterm|^2}$ Algorithm.}
As discussed in \cref{sec:learn}, one can break the independence assumption in \cref{eq:bertprob} by making each tag dependent on the previously generated tag. Consequently, when finding the highest-scoring path in the decoding step, we should remember the last generated tag.
Therefore, in addition to $i$ and $j$, the generated tag must be stored.
This adds $\bigo{|\nonterm|}$ extra memory per node, and increases the runtime complexity to $\bigo{dN|\nonterm|^2}$. 
Please refer to \Cref{app:dps} for a more formal treatment of the dynamic program for finding the highest-scoring tag sequence (\Cref{alg:dp-dep}).

\subsection{Beam Search Decoding}
We can speed up the dynamic programming algorithms above if we forego the desire for an exact algorithm.
In such a case, applying beam search presents a reasonable alternative where  we only keep track of $h$ tag sequences with the highest score at each step of the tag generation.
Although no longer an exact algorithm, beam search reduces the time complexity to $\bigo{h \log h\, N|\nonterm|}$.\footnote{This can be improved to $\bigo{h N|\nonterm|}$ with quick select.}
This can result in a substantial speed-up over the dynamic programs presented above when $d$, the maximum required stack size, is much larger than the beam size $h$.
We empirically measure the trade-off between accuracy and speed in our experiments.

\begin{table*}[t] 
\centering 
\tabcolsep=0.11cm
\begin{tabular}{@{}lcccccccc@{}}\toprule
& Basque & French & German & Hebrew & Hungarian & Korean & Polish & Swedish \\ \midrule
\citet{kitaev-etal-2019-multilingual} & 91.63 & 87.42 & 90.20 & 92.99 & 94.90 & 88.80 & 96.36 & 88.86 \\
\citet{kitaev-klein-2018-constituency} & 89.71 & 84.06 & 87.69 & 90.35 & 92.69 & 86.59 & 93.69 & 84.45 \\ \midrule
\texttt{in-order}-\textsc{[indep.]} & \textbf{89.86} & 84.54 & \textbf{88.34} & \textbf{91.40} & \textbf{93.81} & 84.89 & \textbf{95.20} & \textbf{86.66} \\ 
\texttt{pre-order}-\textsc{[indep.]} & 87.98 & \textbf{84.76} & 88.18 & 89.45 & 90.69 & 81.81 & 94.84 & 84.65 \\
\texttt{post-order}-\textsc{[indep.]} & 81.05 & 56.97 & 78.07 & 64.21 & 76.24 & \textbf{86.64} & 86.91 & 58.44 \\
\bottomrule
\end{tabular}
\caption{Comparison of FMeasure of different tagging schemata on the SPMRL test set}
\label{tab:parsing-metrics-multi}
\end{table*}

\section{Experiments}
In our experimental section we aim to answer two questions: (i) What is the effect of each design decision on the efficiency and accuracy of a tagger, and which design decisions have the greatest effect? (ii) Are the best design choices consistent across languages?\looseness=-1

\paragraph{Data.}
We use two data sources: the Penn Treebank \citep{marcus-etal-1993-building} for English constituency parsing and Statistical Parsing of Morphologically Rich Languages (SPMRL) 2013/2014 shared tasks \citep{seddah-etal-2013-overview, seddah-etal-2014-introducing} for 8 languages: Basque, French, German, Hebrew, Hungarian, Korean, Polish, and Swedish.
We provide the dataset statistics in \Cref{tab:dataset}.
We perform similar preprocessing as \citet{kitaev-klein-2020-tetra}, which we explain in \cref{app:preprocess}.
For evaluation, the \textsc{evalb} Perl script\footnote{\href{http://nlp.cs.nyu.edu/evalb/}{\texttt{http://nlp.cs.nyu.edu/evalb/}}} is used to calculate FMeasure of the parse tree.
We use only \emph{one} GPU node for the reported inference times.
Further experimental details can be found in \cref{app:exp}.\looseness=-1

\subsection{What Makes a Good Tagger?}
\paragraph{Linearization.} First, we measure the effect of linearization and alignment on the performance of the parser. We train BERT\footnote{We use \textsc{bert-large-uncased} from the Huggingface library \citep{wolf-etal-2020-transformers}.} on the English dataset.
While keeping the learning schema fixed, we experiment with three linearizations: $\tetraTagger$, $\busrTagger$, and $\tdsrTagger$. 
We observe that the in-order shift--reduce linearizer leads to the best FMeasure ($95.15$), followed by the top-down linearizer ($94.56$). 
There is a considerable gap between the performance of a tagger with the post-order linearizer ($89.23$) and other taggers, as shown in \Cref{tab:parsing-metrics}. 
The only difference between these taggers is the linearization function, and more specifically, the deviation between tags and the input sequence as defined in \cref{defin:deviation}.
This result suggests that deviation is an important factor in designing accurate taggers.


\paragraph{Learning Schema.} Second, we measure the effect of the independence assumption in scoring tagging sequences. We slightly change the scoring mechanism by training a CRF\footnote{We use our custom implementation of CRF with pytorch.} and an LSTM layer\footnote{We use a two-layered biLSTM network, with the hidden size equal to the tags vocabulary size $|\mathcal{T}|$ (approximately 150 cells). For further details please refer to \cref{app:exp}.} to make scoring over tags left- or right-dependent.
We expect to see improvements with $\tdsrTagger$ and $\busrTagger$, since in these two cases, creating such dependencies gives us more expressive power in terms of the distributions over trees that can be modeled.
However, we only observe marginal improvements when we add left and right dependencies (see \Cref{tab:parsing-metrics}). 
Thus, we hypothesize that BERT already captures the dependencies between the words.\looseness=-1

\begin{table}
\centering
\resizebox{\columnwidth}{!}{%
\begin{tabular}{@{}lcccc@{}}\toprule
& \multicolumn{2}{c}{Beam Search} & \multicolumn{2}{c}{DP} \\
\cmidrule(lr){2-3}
\cmidrule(lr){4-5}
& FMeasure & Sents/s & FMeasure & Sents/s \\ \midrule
\makecell{\texttt{in-order}$^\star$ \\ \textsc{[indep.]}} & 91.59 & 156 & 95.15 & 128 \\ \midrule
\makecell{\texttt{pre-order} \\ \textsc{[indep.]}} & 93.57 & 114 & 94.56 &  51 \\
\makecell{\texttt{pre-order} \\ \textsc{[left dep. crf]}} & 93.65 & 110 & 94.47 & 21 \\
\makecell{\texttt{pre-order} \\ \textsc{[left dep. lstm]}} & 93.78 & 110 & 94.38 & 20 \\ \midrule
\makecell{\texttt{post-order} \\ \textsc{[indep.]}} & 85.38 & 108  & 89.23 & 29\\
\makecell{\texttt{post-order} \\ \textsc{[right dep. crf]}} & 82.28 & 108  & 88.28 & 5 \\
\makecell{\texttt{post-order} \\ \textsc{[right dep. lstm]}} & 85.25 & 103 & 89.61 & 5\\
\bottomrule
\end{tabular}}
\caption{Comparison of parsing metrics on the WSJ test set. 
$^\star$This is the exact equivalent setup to tetratagger. However, because we neither use the exact same code nor the hardware, the numbers do not exactly match with what is reported in the original paper.}
\label{tab:parsing-metrics}
\end{table}

\paragraph{Decoding Schema.} 
Third, we assess the trade-off between accuracy and efficiency caused by using beam search versus exact decoding. 
We take $h=10$ as the beam size. As shown in \Cref{tab:parsing-metrics}, taggers with post-order linearization take longer to decode using the exact dynamic program.
This is largely due to the fact that they need deeper stacks in the decoding process (see \cref{app:stack} for a comparison).
In such scenarios, we see that beam search leads to a significant speed-up in decoding time, at the expense of a drop (between 3 to 6 points) in FMeasure.
However, the drop in accuracy observed with the pre-order linearization is less than 1 point.
Therefore, in some cases, beam search may be able to offer a sweet spot between speed and accuracy.
To put these results into context, we see taggers with in-order and pre-order linearizations achieve comparable results to state-of-the-art parsers with custom architectures, where the state-of-the-art parser achieves 95.84 FMeasure \citep{zhou-zhao-2019-head}.\looseness=-1

\subsection{Multilingual Parsing}
As shown in the previous experiment, linearization plays an important role in constructing an accurate tagger.
Moreover, we observe that in-order linearizers yield the best-performing taggers, followed by the pre-order and post-order linearizers. In our second set of experiments, we attempt to replicate this result in languages other than English.
We train BERT-based learners\footnote{We use \textsc{bert-base-multilingual-cased}.} on 8 additional typologically diverse languages.
As the results in \Cref{tab:parsing-metrics-multi} suggest, in general, taggers can achieve competitive FMeasures relative to parsers with custom architectures \citep{kitaev-klein-2018-constituency}. 

Similar to results in English, in most of the other eight languages, taggers with in-order linearizers achieve the best FMeasure.
We note that the gap between the performance of post and pre-order linearization varies significantly across languages.
To further investigate this finding, we compute the distribution of deviation in alignment at the word level (based on \cref{defin:deviation}). 
For most languages, the deviation of the pre-order linearizer peaks close to zero.
On the contrary, in French, Swedish, and Hebrew, the post-order linearizers do not align very well with the input sequence.
Indeed, we observe derivations of up to 30 for some words in these languages.
This is not surprising because these languages tend to be right-branching.
In fact, we observe a large gap between the performance of taggers with pre- and post-order linearizers in these three languages.\looseness=-1

On the other hand, for a left-branching language like Korean, post-order linearization aligns nicely with the input sequence.
This is also very well reflected in the parsing results, where the tagger with post-order linearization performs better than taggers using in-order or pre-order linearization. 

Our findings suggest that a tagger's performance depends on the deviation of its linearizer. We measure this via Pearson's correlation between parsers' FMeasure and the mean deviation of the linearizer, per language. We see that the two variables are negatively correlated ($-0.77$) with a $p$ value of $0.0001$.
We conclude that the deviation in the alignment of tags with the input sequence is highly dependent on the characteristics of the language.\looseness=-1

\section{Conclusion}
In this paper, we analyze parsing as tagging, a relatively new paradigm for developing statistical parsers of natural language. 
We show many parsing-as-tagging schemes are actually closely related to the well-studied shift–-reduce parsers. 
We also show that \citeposs{kitaev-klein-2020-tetra} tetratagging scheme is, in fact, an in-order shift--reduce parser, which can be derived from bottom-up or top-down shift--reduce parsing on the right or left-cornered grammars, respectively. 
We further identify three common steps in parsing-as-tagging pipelines and explore various design decisions at each step. 
Empirically, we evaluate the effect of such design decisions on the speed and accuracy of the resulting parser.
Our results suggest that there is a strong negative correlation between the deviation metric and the accuracy of constituency parsers.\looseness=-1 

\section*{Ethical Concerns}
We do not believe the work presented here further amplifies biases already present in the datasets and the algorithms that we experiment with because our work is primarily a theoretical analysis of existing work.
Therefore, we foresee no ethical concerns in this work.\looseness=-1

\section*{Limitations}
This work only focuses on constituency parsing. 
Therefore, the results might not fully generalize to other parsing tasks, e.g., dependency parsing. Furthermore, in our experiments, we only focus on comparing design decisions of parsing-as-tagging schemes, and not on achieving state-of-the-art results. 
We believe that by using larger pre-trained models, one might be able to obtain better parsing performance with the tagging pipelines discussed in this paper. In addition, in our multilingual analysis, the only left-branching language that we had access to was Korean, therefore, more analysis needs to be done on left-branching languages.

\section*{Acknowledgments}
We would like to thank Nikita Kitaev, Tim Vieira, Carlos Gómez-Rodríguez, and Jason Eisner for fruitful discussions on a preliminary draft of the paper. We would also like to thank anonymous reviewers for their insightful feedback during the review process. We also thank Zeerak Talat and Clément Guerner for their feedback on this manuscript. Afra Amini is supported by ETH AI Center doctoral fellowship. 

\bibliographystyle{acl_natbib}
\bibliography{acl2020}
\clearpage
\newpage
\appendix
\onecolumn

\section{Related Work} \label{app:literature}
We provide a comparative review of parsing as tagging works on both dependency and constituency parsing, focusing on their design decisions at different steps of the pipeline. 
\subsection{Dependency Parsing as Tagging}
Previous work on dependency parsing as tagging linearizes the dependency tree by iterating through the dependency arcs and encoding the relative position of the child with respect to its parent in the tag sequence \citep{li-etal-2018-seq2seq, kiperwasser-ballesteros-2018-scheduled}. 
Each word is then aligned with the tag indicating its relative position with respect to its head. At the decoding step, to prevent the model from generating dependency arcs that do not form valid dependency trees, a tree constraint is often applied \citep{li-etal-2018-seq2seq}. Similarly, \citet{vacareanu-etal-2020-parsing} employ contextualized embeddings of the input with the same tagging schema. 
\citet{strzyz-etal-2019-viable} provide a framework and train and compare various dependency parsers as taggers. See also \citet{strzyz2021viability} for further comparison.

Closest to our work, \citet{gomez-rodriguez-etal-2020-unifying} suggest a linearization and alignment scheme that works with \emph{any} transition-based parser to create a sequence of tags from a sequence of actions.
Therefore, their method applies to both constituency and dependency parsing. 
For instance, turning a shift--reduce parser into a tag sequence requires creating a tag for all the \reduce actions up to each \shift action. 
While this construction results in exactly $N$ tags, there are in general an exponentially large number of tags. 
To compare this approach with the linearizations discussed in this paper, we must note that while such a schema is shift-aligned, it might not \emph{evenly} align the tags with the input sequence.
Moreover, the in-order linearization scheme discussed here has a small number of tags.
\subsection{Constituency Parsing as Tagging}
\citet{gomez-rodriguez-vilares-2018-constituent} provide an in-order linearization of the derivation tree, where they encode nonterminals and their depth as a tagging sequence.
Since the depth is encoded in the tags themselves, the tag set size is unbounded and is dependent on the input. 
\citet{kitaev-klein-2020-tetra} improve upon \citeposs{gomez-rodriguez-vilares-2018-constituent} encoding by discovering a similar tagging system that has only four tags. 
In \citeposs{kitaev-klein-2020-tetra} linearization step, they encode both terminals and nonterminals in the tag sequence, as well as the direction of a node with respect to its parent. 
Using a pretrained BERT model to score tags, they show that their approach reaches state-of-the-art performance despite being architecturally simpler.
Furthermore, the parser is significantly faster than competing approaches.\looseness=-1

\begin{table*}[t] 
\renewcommand{\arraystretch}{0.7}
\begin{tabular}{@{} p{1.2cm} p{2.5cm} p{3cm} p{2.5cm} p{2.8cm} p{2.5cm} @{}}
$\dww$ & 
\footnotesize \begin{forest} 
 for tree={myleaf/.style={label={[align=center]below:{\strut#1}}}}
 [$\N$[$\N_1$[$\X_i$, myleaf={\textcolor{gray}{$w_i$}}, circle, draw, fill=lightgray][,phantom]][,phantom]]
 \end{forest} & \footnotesize \begin{forest}
 for tree={myleaf/.style={label={[align=center]below:{\strut#1}}}}
 [$\N$[,phantom][$\N_1$ [$\X_i$, myleaf={\textcolor{gray}{$w_i$}}, circle, draw, fill=lightgray][,phantom]]]
 \end{forest} & \footnotesize \begin{forest}
 for tree={myleaf/.style={label={[align=center]below:{\strut#1}}}}
 [[$\N$ [$\N_1$ [,phantom] [$\X_i$, myleaf={\textcolor{gray}{$w_i$}}, circle, draw, fill=lightgray]] [,phantom]][,phantom]]
 \end{forest} & \footnotesize \begin{forest}
 for tree={myleaf/.style={label={[align=center]below:{\strut#1}}}}
 [[,phantom][$\N$ [$\N_1$ [,phantom] [$\X_i$, myleaf={\textcolor{gray}{$w_i$}}, circle, draw, fill=lightgray]] [,phantom]]] 
 \end{forest} & \footnotesize \begin{forest}
 for tree={myleaf/.style={label={[align=center]below:{\strut#1}}}}
 [$\SSS$ [,phantom] [\dots [,phantom] [$\N_1$ [,phantom] [$\X_i$, myleaf={\textcolor{gray}{$w_i$}}, circle, draw, fill=lightgray]]]] \end{forest} \\
 $\tetraTagger$ & 
 $\leftchild \Leftchild{\scriptstyle (\N_1)}$ & $\leftchild \Rightchild{\scriptstyle (\N_1)}$ & $\rightchild \Leftchild{\scriptstyle (\N)}$ &  $\rightchild \Rightchild{\scriptstyle (\N)}$ & $\rightchild$ \\ \midrule
 $\dwrc$ & 
 \footnotesize \begin{forest} 
 for tree={myleaf/.style={label={[align=center]below:{\strut#1}}}}
 [$\N_1/.$[$\N_1/.$[$\varepsilon$][$\X_i$, myleaf={\textcolor{gray}{$w_i$}}, circle, draw, fill=lightgray]][,phantom]]
 \end{forest} & \footnotesize \begin{forest}
 for tree={myleaf/.style={label={[align=center]below:{\strut#1}}}}
 [$\N'$[$\N'/.$ [$\N'/\N_1$] [$\X_i$, myleaf={\textcolor{gray}{$w_i$}}, circle, draw, fill=lightgray]][,phantom]]
 \end{forest} & \footnotesize \begin{forest}
 for tree={myleaf/.style={label={[align=center]below:{\strut#1}}}}
 [[$\N/.$ [$\varepsilon$][$\N_1$ [$\N_1/.$] [$\X_i$, myleaf={\textcolor{gray}{$w_i$}}, circle, draw, fill=lightgray]]][,phantom]]
 \end{forest} & \footnotesize \begin{forest}
 for tree={myleaf/.style={label={[align=center]below:{\strut#1}}}}
 [[$\N'/.$ [$\N'/\N$][$\N_1$ [$\N_1/.$] [$\X_i$, myleaf={\textcolor{gray}{$w_i$}}, circle, draw, fill=lightgray]]][,phantom]]
  \end{forest} & \footnotesize \begin{forest}
 for tree={myleaf/.style={label={[align=center]below:{\strut#1}}}}
 [$\SSS$ [$\SSS/.$] [$\X_i$, myleaf={\textcolor{gray}{$w_i$}}, circle, draw, fill=lightgray]]]
  \end{forest}\\
  $\busrTagger$ & 
  \makecell[l]{${\scriptstyle \rightchild}$ \\ ${\scriptstyle \Leftchild(\N_1/. \rightarrow \varepsilon \, \X_i)}$} & \makecell[l]{${\scriptstyle \rightchild}$ \\  ${\scriptstyle \Leftchild(\N'/. \rightarrow \N'/\N_1 \, \X_i)}$} & \makecell[l]{${\scriptstyle \rightchild}$ \\ ${\scriptstyle \Rightchild(\N_1 \rightarrow \N_1/. \, \X_i)}$ \\ ${\scriptstyle \Leftchild(\N/. \rightarrow \varepsilon \, \N_1)}$} & 
  \makecell[l]{${\scriptstyle \rightchild}$ \\ ${\scriptstyle \Rightchild(\N_1 \rightarrow \N_1/. \, \X_i)}$ \\ ${\scriptstyle \Leftchild(\N'/. \rightarrow \N'/\N \, \N_1)}$} &
  \makecell[l]{${\scriptstyle \rightchild}$ \\ ${\scriptstyle \Leftchild(\SSS \rightarrow \SSS/. \, \X_i)}$} \\ [1cm]
  $\maprc$ & 
  $\leftchild \Leftchild{\scriptstyle (\N_1)}$ & $\leftchild \Rightchild{\scriptstyle (\N_1)}$ & $\leftchild \rightchild \Leftchild{\scriptstyle (\N)}$ & $\leftchild \rightchild \Rightchild{\scriptstyle (\N)}$ & $\leftchild \rightchild$ \\
  $\mergerc$ &
  $ \leftchild \Leftchild{\scriptstyle (\N_1)}$ & $\leftchild \Rightchild{\scriptstyle (\N_1)}$ & $\rightchild \Leftchild{\scriptstyle (\N)}$ & 
  $ \leftchild \Rightchild{\scriptstyle (\N)}$ & $\rightchild$
\end{tabular}
\caption{Equivalence of tetratags and bottom-up shift--reduce tags on the right-cornered derivation tree}
\label{tab:tetra-rc}
\end{table*}
\section{Derivations} \label{app:theorems}
The following derivation provides the exact map and merge functions for transforming shift--reduce action sequences to tetratags. 
\begin{der}
Let $\dwrc$ be the derivation tree  after the right-corner transformation for the sentence $\ww$, and let $\awrc$ be the sequence of \shift or \reduce actions taken by a bottom-up shift--reduce parser processing $\dwrc$. 
Then, we have that
\begin{equation} 
    \tetraTagger(\dww) = \mergerc \Big(\maprc\left(\awrc \right)\Big) 
\end{equation}
where $\mathrm{map}$ applies to each element in the action sequence and maps each action to:

\begin{align*}
    & \maprc(\reduce(\N \rightarrow \N/\N_1\; \N_2)) = \rightchild \\
    & \maprc(\reduce(\N/\N_1 \rightarrow \N/\N_2\; \N_3)) = \Rightchild(\N_2)\\
    & \maprc(\reduce(\N/\N_1 \rightarrow \varepsilon\; \N_2)) = \Leftchild(\N)\\
    & \maprc(\shift(\bullet)) = \leftchild
\end{align*}
The $\mergerc$ function reads the sequence of tags from left to right and whenever it encounters a $\leftchild$ tag followed immediately by a $\rightchild$, it merges them to a $\rightchild$ tag.
\end{der}
\begin{proof}
We split tetratags $\tetraTagger(\dww)$ into $n$ groups, where the first $n-1$ groups consist of two tags and the last group consists of only one tag. Let's focus on a specific split. According to the definition of tetratagger, this split starts with either $\leftchild$ or $\rightchild$, each of which corresponds to a terminal node, i.e. a word in the input sequence: $w_i$ with part-of-speech $\X_i$. Depending on the topology of $\dww$, there are five distinct ways to position $w_i$ in the derivation tree, each of which corresponds to unique tetratags for the split. These configurations are shown in the first row of \Cref{tab:tetra-rc}. Now we apply the right-corner transformation to the derivation trees and we obtain the trees shown in the second row of \Cref{tab:tetra-rc}. Next, we generate bottom-up shift--reduce linearizations for the transformed trees and split them at \shift actions. We see that after applying $\maprc$ and $\mergerc$ on each of the splits, we obtain the exact same tetratags for all of the five possible configurations, as shown in the last row of \Cref{tab:tetra-rc}.
\end{proof}

\begin{der} \label{app:lc}
Let $\dwlc$ be the \textbf{left-corner} transformed derivation tree for the sentence $\ww$, and $\awlc$ be the sequence of \shift or \reduce actions took by a top-down shift--reduce parser processing $\dwlc$. Then: 
\begin{equation} 
    \tetraTagger(\dww) = \mergelc \Big(\maplc\left(\awlc \right)\Big)
\end{equation}

where $\mathrm{map}$ applies to each element in the action sequence and maps each action to:
\begin{align*}
    & \maprc(\reduce(\N \rightarrow \N_2\; \N/\N_1)) = \leftchild \\
    & \maprc(\reduce(\N/\N_1 \rightarrow \N_3\; \N/\N_2)) = \Leftchild(\N_2)\\
    & \maprc(\reduce(\N/\N_1 \rightarrow \N_2\; \varepsilon)) = \Rightchild(\N)\\
    & \maprc(\shift(\bullet)) = \rightchild
\end{align*}
And $\mergerc$ function reads the sequence of tags from left to right and whenever it encounters a $\leftchild$ tag followed immediately by a $\rightchild$, it merges them to a $\leftchild$ tag.
\end{der}

\begin{proof}
The proof is similar to the proof of right-corner and bottom-up shift--reduce. 
\end{proof}

\section{Decoding Algorithms} \label{app:dps}
\begin{prop}
Given a scoring function with independent approximation, $\score(\X \rightarrow \Y \, \Z) \defeq \score(\bullet \rightarrow \Y\, \bullet) \times \score(\bullet \rightarrow \bullet \, \Z)$, over derivation trees, we can find the highest-probability derivation in $\bigo{dN|\nonterm|}$ using \Cref{alg:dp-indep}.
\end{prop}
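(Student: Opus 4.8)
The plan is to recognize \Cref{alg:dp-indep} as the dynamic program that computes the maximum-weight source-to-sink path in the decoding DAG of \cref{fig:decode}, and to show (i) that this path encodes $\argmax_{\dww} p(\dww \mid \ww)$ under the independent approximation and (ii) that the computation runs in time $\bigo{dN|\nonterm|}$.

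First I would make the factorization of \cref{eq:parse_prob} explicit. Under the independent approximation $\score(\X \rightarrow \Y\,\Z) = \score(\bullet \rightarrow \Y\,\bullet)\,\score(\bullet \rightarrow \bullet\,\Z)$, so each binary production in $\prod_{\dnode \in \dww}\score(\production(\dnode))$ splits into one factor attributable to its left child and one attributable to its right child. Since every non-root node is a child of exactly one binary production, the product over interior nodes equals a product, over non-root nodes $\dnode'$, of $\score(\bullet\rightarrow\rho(\dnode')\,\bullet)$ when $\dnode'$ is a left child and of $\score(\bullet\rightarrow\bullet\,\rho(\dnode'))$ when it is a right child, times the preterminal factors $\score(\X_i\rightarrow w_i)$, which are constant once the part-of-speech tags are fixed and hence irrelevant to the $\argmax$. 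By construction each of the three linearizers emits exactly one tag per non-root node, recording that node's label and its left/right direction, so $p(\dww\mid\ww)$ is proportional to a product of per-tag scores; maximizing $p(\dww\mid\ww)$ over $\dww\in\Dww$ is therefore the same as maximizing that product over valid tag sequences $\bt\in\calT_{\ww}$.

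Next I would identify $\calT_{\ww}$ with the source-to-sink paths of the DAG. Here I appeal to the characterization from \cref{sec:decode}: the validity of a tag sequence depends only on its length $2N-1$, on the shift/reduce alternation pattern, and on the trajectory of the stack size, never on the stack's contents. Hence the map sending a tag sequence to the sequence of pairs $\langle \text{position}, \text{stack size}\rangle$ it induces is a bijection between $\calT_{\ww}$ and the directed paths from $\langle1,1\rangle$ to $\langle2N-1,1\rangle$ whose edges are exactly those of \cref{fig:decode} for the linearizer at hand, each edge labeled by (and weighted with the score of) the tag that triggers the transition, with reduce tags writing different nonterminals giving rise to parallel edges. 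A maximum-weight path thus corresponds to the highest-scoring valid tag sequence, whose preimage under the linearizer is $\argmax_{\dww} p(\dww \mid \ww)$ by the previous paragraph. \Cref{alg:dp-indep} computes this path by relaxing the nodes $\langle i,j\rangle$ in increasing order of $i$ (a topological order of the DAG), storing at each node the best partial score together with a backpointer, and then tracing the backpointers from $\langle2N-1,1\rangle$ to reconstruct $\dww$.

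Finally, the complexity. There are $\bigo{N}$ positions $i$ and $\bigo{d}$ reachable stack sizes $j$, hence $\bigo{dN}$ nodes; from a node $\langle i,j\rangle$ the admissible transitions are fixed by the previous tag type and the stack size, and their only multiplicity is the nonterminal written by a reduce tag, so the out-degree is $\bigo{|\nonterm|}$ and there are $\bigo{dN|\nonterm|}$ edges in total. One topological-order sweep relaxes each edge exactly once, so both the forward pass and the backpointer trace cost $\bigo{dN|\nonterm|}$. The hard part is bookkeeping rather than depth, and it lives in the second paragraph together with the reindexing at the end of the first: one must check, case by case for $\tdsrTagger$, $\busrTagger$, and $\tetraTagger$, that the edge sets of \cref{fig:decode} capture the validity constraints exactly, so that the path/sequence correspondence is a genuine bijection and not merely an injection; and one must check that the per-node factorization assigns each reduce tag exactly one left- or right-child factor with no omission or double counting---in particular the root's tag, which carries no genuine direction, needs to be handled separately. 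Once these points are nailed down, the $\bigo{dN|\nonterm|}$ bound is an immediate edge count.
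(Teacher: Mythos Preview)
Your proof is correct and more detailed than the paper's own proof sketch, but it takes a genuinely different route. The paper argues by starting from the standard shift--reduce transformation of a CNF grammar into a pushdown automaton and invoking Lang's $\bigo{N^3|\nonterm|^3}$ dynamic program over PDA runs; it then observes that under the independent approximation the stack contents become irrelevant and only the stack height matters, collapsing the runtime to $\bigo{dN|\nonterm|}$. You instead work directly from the factorization of $p(\dww\mid\ww)$: you show that the independent approximation turns the tree score into a product of per-tag factors, identify valid tag sequences with source-to-sink edge-paths in the decoding DAG, and read off the complexity as an edge count. Your approach is more elementary and self-contained---it does not lean on external PDA-intersection results---and it makes explicit \emph{why} the independence assumption is exactly what licenses dropping the stack contents. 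The paper's approach, by contrast, situates the result within the broader landscape of transition-based parsing complexity and makes the connection to classical parsing theory more visible. One minor slip in your write-up: you say the linearizers emit ``one tag per non-root node,'' but they emit one tag per interior node including the root (seven tags for seven interior nodes in the running example); you are evidently aware of this, since you flag the root's direction-less tag as a special case at the end.
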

\begin{proofsketch}
We give an outline of the construction.
First, consider a grammar in Chomsky normal form $\grammar$.
Next, apply the standard shift--reduce transformation to $\grammar$ in order to construct a pushdown automaton (PDA). 
Finally, note that under the independence assumption $\score(\X \rightarrow \Y \, \Z) \defeq \score(\bullet \rightarrow \Y\, \bullet) \times \score(\bullet \rightarrow \bullet \, \Z)$ we can reduce the runtime of the dynamic program that sums over all runs in the PDA from $\bigo{N^3 |\nonterm|^3}$ \citep{lang, alexandra} to $\bigo{d N |\nonterm|}$ because we only need to keep track of the \emph{height} of the stack $d$ and not which elements are in it. Note that $d = N$ in the worst case.
This can be viewed as using \Cref{alg:dp-indep} to find the valid tagging sequence with the highest score in a directed acyclic graph, as shown in \Cref{fig:decode}.
\end{proofsketch}
\begin{prop}
Suppose we are given a scoring function with left- or right-dependent approximation, $\score(\X \rightarrow \Y \, \Z) \defeq \score(\X \rightarrow \Y \bullet)$ or $\score(\X \rightarrow \Y \, \Z) \defeq \score(\X \rightarrow \bullet \, \Z)$, over derivation trees.
We can find the derivation tree with the highest score in $\bigo{dN|\nonterm|^2}$ using \Cref{alg:dp-dep}.
\end{prop}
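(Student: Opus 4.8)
The plan is to mirror the construction used for the independent-approximation proposition, adding exactly one extra coordinate to the dynamic-programming state. First I would start from the Chomsky-normal-form grammar $\grammar$, apply the standard shift--reduce transformation to obtain a pushdown automaton, and recall that maximizing over its runs is, a priori, an $\bigo{N^3|\nonterm|^3}$ CKY-style dynamic program. As in the independent case, the observation that the validity of a tag sequence depends only on the stack \emph{height} (not its contents) collapses the first two $\bigo{|\nonterm|}$ factors and replaces $N^3$ by $dN$, yielding the DAGs of \Cref{fig:decode} (center for $\tdsrTagger$, right for $\busrTagger$) whose nodes are pairs $\langle i, j\rangle$ recording the number of emitted tags $i$ and the stack height $j$. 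The new ingredient is that the left- or right-dependent approximation makes a rule score depend on a \emph{second} nonterminal, so I would augment each DAG node with the last emitted tag, giving states $\langle i, j, t\rangle$; there are $\bigo{dN|\nonterm|}$ of them, and each has $\bigo{|\nonterm|}$ outgoing edges (the choices for the next tag), for $\bigo{dN|\nonterm|^2}$ edge relaxations overall --- this is \Cref{alg:dp-dep}. Since $d=N$ in the worst case, this matches the stated bound.

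The crux is to show that a first-order Markov weighting on this augmented DAG reproduces exactly $\prod_{\dnode \in \dww} \score(\production(\dnode))$ under the approximation, and this is where the pre-/post-order restriction is essential. Under the left-dependent approximation $\score(\X\to\Y\,\Z)\defeq\score(\X\to\Y\bullet)$, which is valid only for $\tdsrTagger$, I would invoke the fact already noted in \Cref{sec:decode} that in a pre-order traversal the reduce tag for a node with production $\X\to\Y\,\Z$ is \emph{immediately} followed by the tag emitted for its left child $\Y$. Hence the weight $\score(\X\to\Y\bullet)$ can be charged on exactly that transition: the source tag determines $\X$ and the target tag determines $\Y$, so the quantity is a function of the pair of adjacent tags, as a first-order model requires; the right child $\Z$ contributes weight $1$ under the approximation, so no non-local bookkeeping is needed. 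The symmetric argument handles the right-dependent approximation with $\busrTagger$: in a post-order traversal the last tag emitted for the subtree rooted at the right child $\Z$ is the tag for $\Z$ itself, and it is immediately followed by the reduce tag for $\X\to\Y\,\Z$, so $\score(\X\to\bullet\,\Z)$ is charged on that transition. Pre-terminal productions $\X\to x$ are unaffected by the approximation, and their scores $\score(\X\to x)$ are multiplied in when the corresponding $\leftchild$ or $\rightchild$ tag for word $x$ is emitted.

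I expect the main obstacle to be the careful verification of these adjacency claims together with the boundary conditions --- the special first tag, the start symbol, and the interaction with the grammar transformations of \Cref{app:theorems} --- and arguing that the bijection between accepting DAG paths and valid tag sequences (hence derivation trees, by injectivity of the linearizer restricted to its image) is preserved once the extra coordinate is added, so that the $\argmax$ over paths coincides with the $\argmax$ over derivation trees. Given the previous proposition, this amounts to checking that the extra coordinate is a deterministic function of the path prefix and never prunes a valid continuation; this is routine but must be stated precisely, and I would relegate the full case analysis to \Cref{app:dps} alongside \Cref{alg:dp-dep}.
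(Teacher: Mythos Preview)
Your proposal is correct and follows essentially the same route as the paper's proof sketch: convert the CNF grammar to a shift--reduce PDA, then argue that under the left-/right-dependent approximation the $\bigo{N^3|\nonterm|^3}$ dynamic program collapses to $\bigo{dN|\nonterm|^2}$ because it suffices to track the stack height plus one additional coordinate (you phrase it as the ``last emitted tag,'' the paper as the ``top element on the stack''---your phrasing actually matches \Cref{alg:dp-dep} more closely). Your second paragraph supplying the adjacency argument for why the first-order Markov weighting recovers $\prod_{\dnode}\score(\production(\dnode))$ goes beyond the paper's terse sketch, but it is extra rigor along the same line rather than a different approach.
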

\begin{proofsketch}
We give an outline of the construction.
First, consider a grammar in Chomsky normal form $\grammar$.
Next, apply the standard shift--reduce transformation to $\grammar$ in order to construct a pushdown automaton (PDA). 
Finally, note that under the left- or right-dependent approximation, $\score(\X \rightarrow \Y \, \Z) \defeq \score(\X \rightarrow \Y \bullet)$ or $\score(\X \rightarrow \Y \, \Z) \defeq \score(\X \rightarrow \bullet \, \Z)$, we can reduce the runtime of the dynamic program that sums over all runs in the PDA from $\bigo{N^3 |\nonterm|^3}$ to $\bigo{d N |\nonterm|^2}$ because we only need to keep track of the \emph{height} of the stack $d$, and the top element on the stack, but not the remaining elements in the stack.
This can be viewed as using \Cref{alg:dp-dep} to find the valid tagging sequence with the highest score in a directed acyclic graph, as shown in \Cref{fig:decode}.
\end{proofsketch}
\begin{algorithm*}
\caption{Dynamic program for decoding with independent scores}
\label{alg:dp-indep}
\begin{algorithmic}[1]
\footnotesize
\Procedure{DP}{$\mathrm{score}$}
\State $W = \mathbf{0}$ \Comment{initialize the DP chart to zero}
\State $W[1, 1] = \mathrm{log}\,\mathrm{score}(\leftchild)$ \Comment{pre-order: $W[1, 2] = \mathrm{log}\,\mathrm{score}(\Leftchild(\SSS))$}
\For{$i = 2, \dots, 2N-1$}
\For{$j = 0, \dots, d$} \Comment{$d$: maximum size of the stack}
 \State $\texttt{shift\_score} = \underset{\langle i-1, j+1 \rangle \xrightarrow{t} \langle i, j \rangle}{\max} \{ W[i-1, j+1] + \mathrm{log}\,\mathrm{score}(t) \}$
 \State $\texttt{reduce\_score} = \underset{\langle i-1, j-1 \rangle \xrightarrow{t} \langle i, j \rangle}{\max} \{ W[i-1, j-1] + \mathrm{log}\,\mathrm{score}(t) \}$
 \State $W[i, j] = \max(\texttt{shift\_score}, \texttt{reduce\_score})$
\EndFor
\EndFor
\State \Return $W[2N-1, 1]$ \Comment{pre-order: $W[2N-1, 0]$}
\EndProcedure
\end{algorithmic}
\end{algorithm*}

\begin{algorithm*}
\caption{Dynamic program for decoding with left- or right-dependent scores}
\label{alg:dp-dep}
\begin{algorithmic}[1]
\footnotesize
\Procedure{DP}{$\mathrm{score}$}
\State $W = \mathbf{0}$ \Comment{initialize the DP chart to zero}
\State $W[1, 1, \leftchild] = \mathrm{log}\,\mathrm{score}(\leftchild)$ \Comment{pre-order: $W[1, 2, \Leftchild(\SSS)]$} 
\For{$i = 2, \dots, 2N-1$}
\For{$j = 0, \dots, d$} 
\For{$t = 1, \dots, |\mathcal{T}|$}
 \State $\texttt{shift\_score}= \underset{\langle i-1, j+1, t' \rangle \xrightarrow{t} \langle \text{i, j, t} \rangle}{\max} \{ W[i-1, j+1, t'] + \mathrm{log}\,\mathrm{score}(t', t) \}$
 \State $\texttt{reduce\_score}= \underset{\langle i-1, j-1, t' \rangle \xrightarrow{t} \langle i, j, t \rangle}{\max} \{ W[i-1, j-1, t'] + \mathrm{log}\,\mathrm{score}(t', t) \}$
 \vspace{0.5em}
 \State $W[i, j, t] = \max(\texttt{shift\_score}, \texttt{reduce\_score})$
\EndFor
\EndFor
\EndFor
\State \Return $W[2N-1, 1, \Leftchild(\SSS)]$ \Comment{pre-order: $W[2N-1, 0, \rightchild]$}
\EndProcedure
\end{algorithmic}
\end{algorithm*}

\begin{table}[h]
\centering
\begin{tabular}{@{}lccc@{}}\toprule
Language & \multicolumn{3}{c}{\# Sentences} \\
\cmidrule(lr){2-4} 
& Train & Dev & Test \\ \midrule
English & 39832 & 1700 & 2416 \\
Basque & 7577 & 948 & 946 \\
French & 14759 & 1235 & 2541 \\
German & 40472 & 5000 & 5000 \\
Hebrew & 5000 & 500 & 716 \\
Hungarian & 8146 & 1051 & 1009 \\
Korean & 23010 & 2066 & 2287 \\
Polish & 6578 & 821 & 822 \\
Swedish & 5000 & 494 & 666 \\
\bottomrule
\end{tabular}
\caption{Dataset Statistics}
\label{tab:dataset}
\end{table}
\section{Dataset Statistics}
We use Penn Treebank for English and SPMRL 2013/2014 shared tasks for experiments on other languages. We further use available train/dev/test splits, the number of sentences in each split can be found in \Cref{tab:dataset}.

\section{Stack Size} \label{app:stack}
We empirically compare the stack size needed to parse English sentences using different linearizations in \cref{fig:stack}.
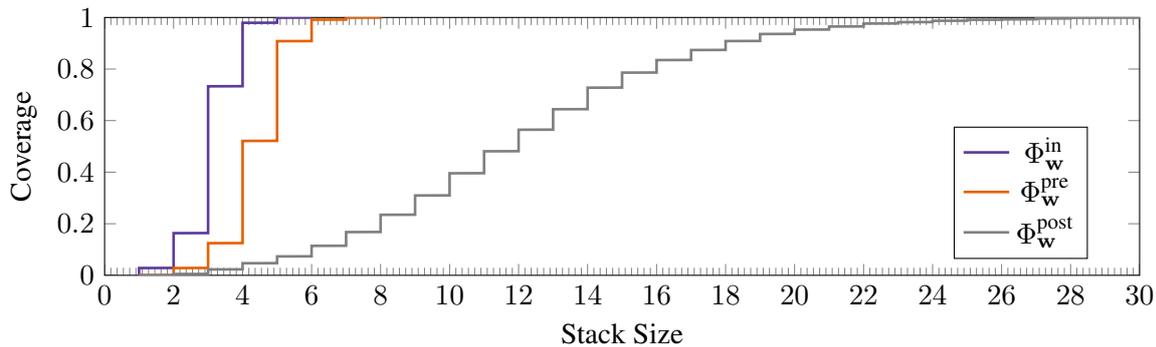
\begin{figure}[h]
\begin{tikzpicture}
\begin{axis}[
    xmin = 0, xmax = 30,
    ymin = 0, ymax = 1,
    xminorticks=true,
    minor x tick num=10,
	xlabel=Stack Size,
	ylabel=Coverage,
	width=0.95\columnwidth,
	height=5cm,
    legend style={at={(0.95,0.32)},anchor=east}
    ]

\addplot[color=prp, line width=0.35mm] table [x=tetra-x, y=tetra-y, col sep=comma] {figures/coverage.csv};

\addplot[color=orng, line width=0.35mm] table [x=td-sr-x, y=td-sr-y, col sep=comma] {figures/coverage.csv};

\addplot[color=gray, line width=0.35mm] table [x=bu-sr-x, y=bu-sr-y, col sep=comma] {figures/coverage.csv};


\legend{$\tetraTagger$, $\tdsrTagger$, $\busrTagger$}
\end{axis}
\end{tikzpicture}
\caption{With $\tetraTagger$ and stack size of 6, all the trees in WSJ test set can be linearized, thus the coverage is 1. On the other extreme, in order to cover all the trees with $\busrTagger$, we need a stack of size 29.}
\label{fig:stack}
\end{figure}

\section{Preprocessing}\label{app:preprocess}
Since our linearization works for binary trees, we do the following preprocessing: we collapse the unary rules by concatenating the nonterminal labels. We then binarize the tree using the \texttt{nltk} library. For in-order linearization, we first perform right-corner transformation of the tree and then run bottom-up shift reduce parsing on the transformed tree followed by the map and merge function introduced in \cref{app:theorems}.\footnote{We empirically test these map and merge functions, and verify that the sequence of transformed shift--reduce actions perfectly matches the original tetratags.}

\section{Experimental Setup} \label{app:exp}
We experiment on 3 \texttt{NVIDIA GeForce GTX 1080 Ti} nodes. The batch size is 32 and sentences in each epoch are sampled without replacement. We use gradient clipping at 1.0 and learning rate \texttt{3e-5} with a warmup over the course of 160 training steps. We calculate FMeasure of the development set 4 times per epoch and cut the learning rate in half, whenever the FMeasure fails to improve. We set the initial epochs to 20, however, after 3 consecutive decays in the learning rate, training is terminated. The checkpoint with the best development FMeasure is used for reporting the test scores. The stack depth for the decoding step is set to the maximum depth of the stack in the training set.
In experiments with the LSTM model, we use a two-layered BiLSTM network. We observe in \Cref{tab:lstm-layers} that adding more layers has a minimal effect on parsers' performance. 
\begin{table}[]
    \centering
    \begin{tabular}{@{}lccc@{}}\toprule
        & Recall & Precision & FMeasure \\ \midrule
        $\tdsrTagger$ + 2-layered BiLSTM & 93.76 & 94.5 & 93.9 \\
        $\tdsrTagger$ + 3-layered BiLSTM & 93.93 & 94.45 & 94.19 \\
        \bottomrule
    \end{tabular}
    \caption{The effect of increasing BiLSTM layers on parsers' performance.}
    \label{tab:lstm-layers}
\end{table}

\end{document}